\newtheorem{mythem}{Theorem}
\newcommand{\probh}{{\bf Pr}}
\newcommand{\calDP}{\mathcal{\widetilde{D}}}
\newcommand{\candDP}{\langle\!\langle \calDP \rangle\!\rangle}
\newcommand{\calD}{\mathcal D}
\newcommand{\PreP}{\widetilde{Pre}}
\newcommand{\AddP}{\widetilde{Add}}
\newcommand{\DelP}{\widetilde{Del}}
\newcommand{\PreR}{\overline{Pre}}
\newcommand{\AddR}{\overline{Add}}
\newcommand{\DelR}{\overline{Del}}
\newcommand{\calP}{\mathcal{P}}
\newcommand{\calPP}{\mathcal{\widetilde{P}}}
\newcommand{\calO}{\mathcal{O}}
\begin{document}
%
\title{Synthesizing Robust Plans under Incomplete Domain Models}
\author{
Tuan A. Nguyen {$^*$}
\and Subbarao Kambhampati {$^*$}
\and Minh B. Do {$^\dagger$} \\
{\small {* Dept. of Computer Science \& Engineering, Arizona State University.
Email: {\tt \{natuan,rao\}@asu.edu }}}\\
{$\dagger$} {\small Embedded Reasoning Area, Palo Alto Research Center.
Email: {\tt minh.do@parc.com}}
}
\maketitle

\begin{abstract}
\begin{quote}

  Most current planners assume complete domain models and focus on
  generating correct plans. Unfortunately, domain modeling is a
  laborious and error-prone task. While domain experts cannot guarantee
  completeness, often they are able to circumscribe the incompleteness
  of the model by providing annotations as to which parts of the domain
  model may be incomplete. In such cases, the goal should be to generate
  plans that are robust with respect to any known incompleteness of the
  domain. In this paper, we first introduce annotations expressing the
  knowledge of the domain incompleteness, and formalize the notion of
  plan robustness with respect to an incomplete domain model. We then
  propose an approach to compiling the problem of finding robust plans
  to the conformant probabilistic planning problem. We present
  experimental results with Probabilistic-FF, a state-of-the-art
  planner, showing the promise of our approach.



\end{quote}
\end{abstract}


\section{Introduction}

In the past several years, significant strides have been made in scaling
up plan synthesis techniques. We now have technology to routinely
generate plans with hundreds of actions. All this work, however, makes a
crucial assumption--that a complete model of the domain is specified in
advance. While there are domains where knowledge-engineering such
detailed models is necessary and feasible (e.g., mission planning
domains in NASA and factory-floor planning), it is increasingly
recognized
(c.f. \cite{hoffmann2010sap,rao07}) that there are also many
scenarios 
where insistence on correct and complete models renders the current
planning technology unusable. What we need to handle such cases is a
planning technology that can get by with partially specified domain
models, and yet generate plans that are ``robust'' in the sense that
they are likely to execute successfully in the real world.

This paper addresses the problem of formalizing the notion of plan
robustness with respect to an incomplete domain model, and connects the
problem of generating a robust plan under such model to \emph{conformant probabilistic
  planning}~\cite{kushmerick1995algorithm,hyafil2003conformant,bryce2006sequential,prob-ff}.
Following Garland \& Lesh~\shortcite{garland2002plan}, we shall assume
that although the domain modelers cannot provide complete models, often
they are able to provide annotations on the partial model circumscribing
the places where it is incomplete. In our framework, these annotations
consist of allowing actions to have \emph{possible} preconditions and
effects (in addition to the standard necessary preconditions and
effects). 

As an example, consider a variation of the \emph{Gripper} domain, a
well-known planning benchmark domain. The robot has one gripper that can be
used to pick up balls, which are of two types light and heavy, from one
room and move them to another room. The modeler suspects that the gripper
may have an internal problem, but this cannot be confirmed until the
robot actually executes the plan. If it actually has the problem, the
execution of the \emph{pick-up} action succeeds only with balls that are
\emph{not} heavy, but if it has no problem, it can always pickup all
types of balls. The modeler can express this partial knowledge about the
domain by annotating the action with a statement representing the
possible precondition that balls should be light.

Incomplete domain models with such possible
preconditions/effects implicitly define an exponential set of
complete domain models, with the semantics that the real
domain model is guaranteed to be one of these. The robustness of a
plan can now be formalized in terms of the cumulative probability mass of the complete
domain models under which it succeeds. We propose an approach that compiles the
problem of finding robust plans into the conformant probabilistic
planning problem. We present
experimental results showing scenarios where the approach works well,
and also discuss aspects of the compilation that cause scalability issues.

\section{Related Work}
Although there has been some work on reducing the ``faults'' in plan execution
(e.g.  the work on \emph{k-fault}
plans for non-deterministic planning~\cite{jensen2004fault}), it is
based in the context of stochastic/non-deterministic actions rather
than incompletely specified ones.
The semantics of the possible preconditions/effects in our incomplete
domain models differ fundamentally from non-deterministic and
stochastic effects. Executing different instances of the same pick-up
action in the \emph{Gripper} example above would either all fail or
all succeed, since there is no uncertainty but the information is
unknown at the time the model is
built. 
In contrast, if the pick-up action's
effects are stochastic, then trying the same picking action multiple
times increases the chances of success. 

Garland \& Lesh ~\shortcite{garland2002plan} share the same objective
with us on generating robust plans under incomplete domain models.
However, their notion of robustness, which  is defined in terms of four
different types of risks, only has tenuous heuristic connections
with the likelihood of successful execution of plans. Robertson \& Bryce
~\shortcite{robertson09} focuses on the plan generation in Garland \&
Lesh model, but their approach still relies on the same
unsatisfactory formulation of robustness. The work by Fox et al
(\citeyear{fox05}) also explores robustness of plans, but their focus is
on temporal plans under unforeseen execution-time variations rather
than on incompletely specified domains. Our work can also be categorized as one
particular instance of the general model-lite planning problem, as
defined in \cite{rao07}, in which the author points out a large class of
applications where handling incomplete models  is unavoidable due to the difficulty in getting a
complete model.

\section{Problem Formulation}

We define an \emph{incomplete domain
  model} $\calDP$ as $\calDP = \langle F, A \rangle$, where $F=\{p_1, p_2,
..., p_n\}$ is a set of \emph{propositions}, $A$ is a set of
\emph{actions} that might be incompletely specified. We denote
$\mathbf{T}$ and $\mathbf{F}$ as the \emph{true} and \emph{false} truth
values of propositions. A \emph{state} $s \subseteq F$ is a set of
propositions. In addition to proposition sets that are known as its
preconditions $Pre(a) \subseteq F$, add effects $Add(a) \subseteq F$ and
delete effects $Del(a) \subseteq F$, each action $a \in A$ also
contains:

\begin{itemize}
\item Possible precondition set $\PreP(a) \subseteq F$ contains propositions
  that action $a$ \emph{might} need as its preconditions.

\item Possible add (delete) effect set $\AddP(a) \subseteq F$
  ($\DelP(a) \subseteq F$) contains propositions
  that the action $a$ \emph{might} add (delete, respectively) after its
  execution.

\end{itemize}

In addition, each possible precondition, add and delete effect $p$ of
the action $a$ is associated with a weight $w^{pre}_a(p)$,
$w^{add}_a(p)$ and $w^{del}_a(p)$ ($0 < w^{pre}_a(p), w^{add}_a(p),
w^{del}_a(p) < 1$) representing the domain modeler's assessment of
the likelihood that $p$ will actually be \emph{realized} as a
precondition, add and delete effect of $a$ (respectively) during plan
execution. Possible preconditions and effects whose likelihood of
realization is not given are assumed to have weights of
$\frac{1}{2}$.


Given an incomplete domain model $\calDP$, we define its {\em
completion set} $\candDP$ as the set of {\em complete} domain models
whose actions have all the necessary preconditions, adds and deletes,
and a  {\em subset} of the possible preconditions, possible adds and
possible deletes. 
Since any subset of $\PreP(a)$, $\AddP(a)$ and $\DelP(a)$ can be realized as preconditions
and effects of action $a$, there are exponentially large number of
possible \emph{complete} domain models $\calD_i \in \candDP  = \{\calD_1, \calD_2,
..., \calD_{2^K}\}$, where $K = \sum_{a \in A}
(|\PreP(a)| + |\AddP(a)| + |\DelP(a)|)$. For each complete model $\calD_i$,
we denote the corresponding sets of realized
preconditions and effects for each action $a$ as $\PreR_i(a)$, $\AddR_i(a)$
and $\DelR_i(a)$; equivalently, its complete sets of preconditions and
effects are $Pre(a) \cup \PreR_i(a)$, $Add(a) \cup \AddR_i(a)$ and
$Del(a) \cup \DelR_i(a)$.

The projection of a sequence of actions $\pi$ from an initial state $I$ according to
an incomplete domain model $\calDP$ is defined in terms of the projections 
of $\pi$ from $I$ according to  each complete domain  model  $\calD_i \in \candDP$:

\begin{equation}
 \gamma( \pi, I , \calDP) = \bigcup_{\calD_i \in \candDP} \gamma( \pi
, I, \calD_i ) 
\label{projection}
\end{equation}

\noindent
where the projection over complete models is defined in the usual STRIPS  way, with one
important difference. The result of applying an action $a$ in a state
$s$ where the preconditions of $a$ are not satisfied is taken to be
$s$ (rather than as an undefined state).\footnote{We shall see that this change
is necessary so that we can talk about increasing the robustness of a
plan by adding additional actions.}


A \emph{planning problem with incomplete domain } is $\calPP = \langle \calDP ,I,G \rangle$ where
$I \subseteq F$ is the set of propositions that are true  in the
\emph{initial state}, and $G$ is the set of \emph{goal propositions}.
An action sequence $\pi$ is considered a {\bf  valid plan} for $\calPP$ if
$\pi$ solves the problem in at least one completion of
$\candDP$. Specifically, 
$\exists_{\calD_i \in \candDP} \gamma(\pi , I , \calD_i) \models G $. 



\medskip
\noindent
{\bf Modeling Issues in Annotating Incompleteness}:
From the modeling point of view, the possible precondition and effect
sets can be modeled at either the grounded action or action schema level
(and thus applicable to all grounded actions sharing the same action
schema). 
From a practical point of view, however, incompleteness annotations at
ground level hugely increase the burden on the domain modeler. To offer
a flexible way in modeling the domain incompleteness, we allow
annotations that are restricted to either specific variables or value
assignment to variables of an action schema. In particular:

\begin{itemize}

\item \textit{Restriction on value assignment to variables}: Given
  variables $x_i$ with domains $X_i$, one can indicate that
  $p(x_{i_1},...,x_{i_k})$ is a possible precondition/effect of an
  action schema $a(x_1,...,x_n)$ when some variables $x_{j_1}, ...,
  x_{j_l}$ have values $c_{j_1} \in X_{j_1},..., c_{j_l} \in X_{j_l}$
  ($\{i_1,...,i_k\},\{j_1,...,j_l\} \subseteq \{1,...,n\}$). Those
  possible preconditions/effects can be specified with the annotation
  $p(x_{i_1},...,x_{i_k}) \,\, :when \,\, (x_{j_1}=c_{1} \wedge ... \wedge x_{j_l} =
  c_{l})$ for the action schema $a(x_1,...,x_n)$. More generally, we
  allow the domain writer to express a constraint
  $C$ on the variables $x_{j_1}, ..., x_{j_l}$ in the $:when$
  construct. The annotation $p(x_{i_1},...,x_{i_k}) \,\, :when \,\,(C)$
  means that $p(c_{i_1},...,c_{i_k})$ is a
  possible precondition/effect of an instantiated action
  $a(c_1,...,c_n)$ ($c_i \in X_i$) if and only if the assignment
  $(x_{j_1}:=c_{j_1}, ..., x_{j_l}:=c_{j_l})$ satisfies the constraint $C$. This syntax subsumes both
  the annotations at the ground level when $l=n$, and at the schema
  level if $l=0$ (or the $:when$ construct is not specified).


\item \textit{Restriction on variables}: Instead of constraints on
  explicit values of variables, we also allow the possible
  preconditions/effects $p(x_{i_1},...,x_{i_k})$ of an action schema
  $a(x_1,...,x_n)$ to be dependent on some specific variables $x_{j_1}
  ,..., x_{j_l}$ \emph{without any knowledge of their restricted
    values}. This annotation essentially requires less amount of
  knowledge of the domain incompleteness from the domain writer.
  Semantically, the possible precondition/effect $p(x_{i_1},...,x_{i_k})
  \,\, :depends \,\, (x_{j_1},..., x_{j_l})$ of an action schema
  $a(x_1,...,x_n)$ means that (1) there is at least one instantiated
  action $a(c_1,...,c_n)$ ($c_i \in X_i$) having
  $p(c_{i_1},...,c_{i_k})$ as its precondition, and (2) for
  any two assignments $(x_1:=c_1,...,x_n:=c_n),
  (x_1:=c'_1,...,x_n:=c'_n)$ such that $c_{j_t} = c'_{j_t}$ ($1 \leq t
  \leq l$), either both $p(c_{i_1},...,c_{i_k})$ and
  $p(c'_{i_1},...,c'_{i_k})$ are preconditions of the corresponding
  actions, or they are not. Similar to the $:when$ above, the $:depend$
  construct also subsumes the annotations at the ground level when
  $l=n$, and at the schema level if $l=0$ (or the $:depend$ field is not
  specified).

\end{itemize}


%
Another interesting modeling issue is the correlation among the
possible preconditions and effects across actions. In particular, the
domain writer might want to say that two actions (or action schemas)
will have specific possible preconditions and effects in tandem. For
example, we might say that the second action will have a particular
possible precondition whenever the first one has a particular possible
effect. We note that annotations at the lifted level introduce
correlations among possible preconditions and effects at the ground
level.

Although our notion of plan robustness and approach to generating robust
plans (see below) can be adapted to allow such flexible
annotations and correlated incompleteness, for ease of exposition we
limit our discussion to \emph{uncorrelated} possible precondition and effect
annotations specified at the \emph{schema} level (i.e. without using the
$:when$ and $:depend$ constructs).

\section{A Robustness Measure for Plans}


Given an incomplete domain planning problem $\calPP = \langle \calDP
,I,G \rangle$, a valid plan (by our definition above) need only to succeed 
in at least one completion of $\calDP$. Given that $\candDP$ can be exponentially large
in terms of possible preconditions and effects, validity 
is too weak to guarantee on the quality of the plan. What we need is a
notion that $\pi$ succeeds in most of the highly likely completions
of $\calDP$. We do this in terms of a robustness measure.


The robustness of a plan $\pi$ for the problem $\calPP$
is defined as the cumulative probability mass of the
completions of $\calDP$ under which $\pi$ succeeds (in achieving the
goals).  More formally, %
let 
$\probh(\calD_i)$ be the probability distribution 
representing the modeler's estimate of the probability that a given
model in $\candDP$ is the real model of the world (such that
$\sum_{\calD_i \in \candDP} \probh(\calD_i) = 1$). The robustness of
$\pi$ is defined as follows:
%
\begin{equation}
\label{eqn:robust-def}
R(\pi, \calPP: \langle \calDP
,I,G \rangle ) \stackrel{def}{\equiv} \sum_{\calD_i \in \candDP , \gamma(\pi , I , \calD_i) \models G }\probh(\calD_i)
\end{equation}


It is easy to see that if
$R(\pi, \calPP ) > 0$, then $\pi$ is a valid plan for $\calPP$. 

Note that given the uncorrelated incompleteness assumption, the
probability $\probh(\calD_i)$ for a model $\calD_i \in \candDP$ can be
computed as the product of the weights $w^{pre}_a(p)$, $w^{add}_a(p)$,
and $w^{del}_a(p)$ for all $a \in A$ and its possible
preconditions/effects $p$ if $p$ \emph{is} realized in the model
(or the product of
their ``complement'' $1-w^{pre}_a(p)$, $1-w^{add}_a(p)$, and
$1-w^{del}_a(p)$ if $p$ is \emph{not} realized). 


\begin{figure}[t]
\centering
\epsfig{file=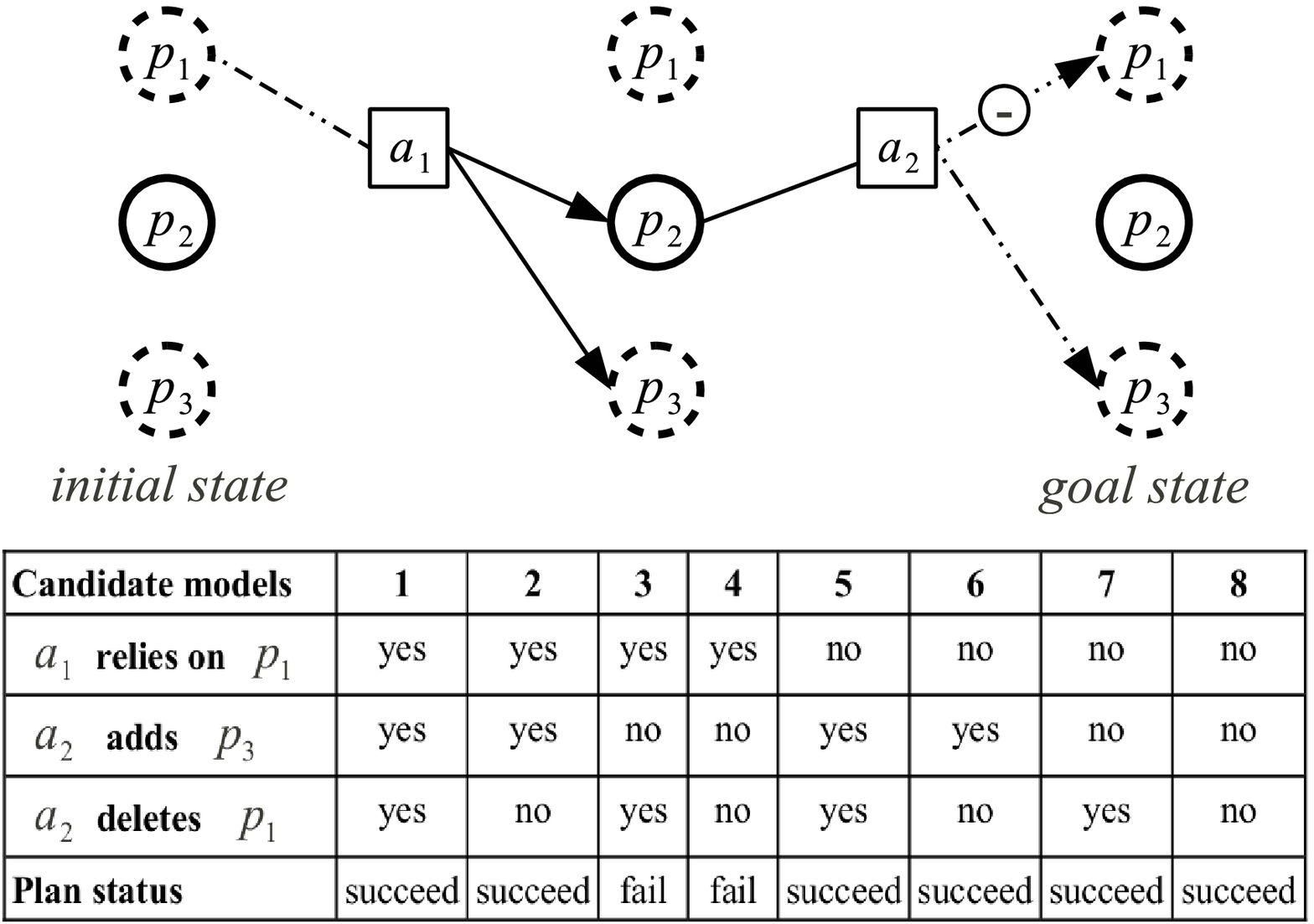,width=3.3in}
\caption{An example of different complete domain models,
  and the corresponding plan status. Circles with solid and dash
  boundary are propositions that are known to be
  $\mathbf{T}$ and may be $\mathbf{F}$ (respectively) when the plan executes. (See text.)
}
\vspace{-.05in}
\label{fig:realization}
\end{figure}

\medskip
\noindent {\bf Example:} Figure~\ref{fig:realization} shows an example
with an incomplete  domain model $\calDP = \langle F, A \rangle$ with
$F=\{p_1,p_2,p_3\}$ and $A=\{a_1,a_2\}$ and a solution plan
$\pi=(a_1,a_2)$ for the problem $\calPP= \langle \calDP,I=\{p_2\},
G=\{p_3\} \rangle$. The incomplete model is:
$Pre(a_1) = \emptyset$, $\PreP(a_1) = \{p_1\}$, $Add(a_1) = \{p_2,p_3\}$,
$\AddP(a_1)=\emptyset$, $Del(a_1)=\emptyset$, $\DelP(a_1)=\emptyset$;
$Pre(a_2) = \{p_2\}$, $\PreP(a_2) = \emptyset$, $Add(a_2) = \emptyset$,
$\AddP(a_2)=\{p_3\}$, $Del(a_2)= \emptyset$, $\DelP(a_2)=\{p_1\}$. Given that the
total number of possible preconditions and effects is 3, the total
number of completions ($| \candDP | $)  is $2^3 = 8$, for
each of which the plan $\pi$ may succeed or fail to achieve $G$, as shown in
the table. 
The robustness value of the plan is $R(\pi) = \frac{3}{4}$ if $\probh(\calD_i)$ is the
uniform distribution. However, if the domain writer thinks that $p_1$ is
very likely to be a precondition of $a_1$ and provides
$w^{pre}_{a_1}(p_1) = 0.9$, the robustness of $\pi$ decreases to $R(\pi)
= 2 \times (0.9 \times 0.5 \times 0.5) + 4 \times (0.1 \times 0.5 \times
0.5) = 0.55$ (as intutively, the
last four models with which $\pi$ succeeds are very unlikely to be the
real one). Note that under the STRIPS model where action failure causes
plan failure, the plan $\pi$ would considered failing to achieve $G$ in the
first two complete models, since $a_2$ is prevented from execution.

\subsection{A Spectrum of Robust Planning Problems}
Given this set up, we can now talk about a spectrum of problems
related to planning under incomplete domain models:

\begin{description}

\item[Robustness Assessment (RA):] Given a plan $\pi$ for the problem
  $\calPP$, assess the robustness of $\pi$.

\item[Maximally Robust Plan Generation (RG$^*$):] Given a problem
  $\calPP$, generate the maximally robust plan $\pi^*$.

\item[Generating Plan with Desired Level of Robustness (RG$^\rho$):]
  Given a problem $\calPP$ and a robustness threshold $\rho$ ($0 < \rho
  \leq 1$), generate a plan $\pi$ with
  robustness greater than or equal to $\rho$.

\item[Cost-sensitive Robust Plan Generation (RG$^*_c$):] Given a problem
  $\calPP$ and a cost bound $c$, generate a plan $\pi$ of maximal
  robustness subject to cost bound $c$ (where the cost of a plan $\pi$
  is defined as the cumulative costs of the actions in $\pi$).

\item[Incremental Robustification (RI$_c$):] Given a plan $\pi$ for the
  problem $\calPP$, improve the robustness of $\pi$, subject to a cost
  budget $c$.

\end{description}



The problem of assessing robustness of plans, RA, can be tackled by compiling
it into a weighted model-counting problem. For plan synthesis problems,
we can talk about either generating a maximally
robust plan, RG$^*$, or finding a plan with a robustness value above the given
threshold, RG$^\rho$. A related issue is that of the interaction between plan
cost and robustness. Often, increasing robustness involves using
additional (or costlier) actions to support the desired goals, and
thus  comes at the
expense of increased plan cost. We can also talk about
cost-constrained robust plan generation problem RG$^*_c$. Finally, in
practice, we are often interested in increasing the robustness of a
given plan (either during iterative search, or during mixed-initiative
planning). We thus also have the incremental variant RI$_c$. 

In this paper, we will focus on RG$^\rho$, the problem of
synthesizing plan with at least a robustness value of $\rho$. 



\section{Compilation to Conformant Probabilistic Planning}

In this section, we will show that the problem of generating plan with
at least $\rho$ robustness, RG$^\rho$, can be compiled into an equivalent conformant probabilistic
planning problem. The most robust plan can then be
found with a sequence of increasing threshold
values.



\subsection{Conformant Probabilistic Planning}

Following the formalism in \cite{prob-ff}, a domain in conformant
probabilistic planning (CPP) is a tuple $\calD'= \langle F', A'
\rangle$, where $F'$ and $A'$ are the sets of propositions and
probabilistic actions, respectively. A belief state $b: 2^{F'}
\rightarrow [0,1]$ is a distribution of states $s \subseteq F'$ (we
denote $s \in b$ if $b(s) > 0$). Each action $a' \in A'$ is specified by
a set of preconditions $Pre(a') \subseteq F'$ and conditional effects
$E(a')$.  For each $e=(cons(e),\calO(e)) \in E(a')$, $cons(e) \subseteq
F'$ is the condition set and $\calO(e)$ determines the set of outcomes
$\varepsilon=(Pr(\varepsilon),add(\varepsilon),del(\varepsilon))$ that
will add and delete proposition sets $add(\varepsilon)$,
$del(\varepsilon)$ into and from the resulting state with the
probability $Pr(\varepsilon)$ ($0 \leq Pr(\varepsilon) \leq 1$ ,
$\sum_{\varepsilon \in \calO(e)} Pr(\varepsilon) = 1$). All condition sets
of the effects in $E(a')$ are assumed to be mutually exclusive and
exhaustive. The action $a'$ is applicable in a belief state $b$ if
$Pre(a') \subseteq s$ for all $s \in b$, and the probability of a state
$s'$ in the resulting belief state is
$b_{a'}(s') = \sum_{s \supseteq Pre(a')} b(s) \sum_{\varepsilon \in
  \calO'(e)} Pr(\varepsilon)$, where $e \in E(a')$ is the conditional
effect such that $cons(e) \subseteq s$, and $\calO'(e) \subseteq
\calO(e)$ is the set of outcomes $\varepsilon$ such that $s' = s \cup
add(\varepsilon) \setminus del(\varepsilon)$.

Given the domain $\calD'$, a problem $\calP'$ is a quadruple $\calP' =
\langle \calD',b_I,G',\rho' \rangle$, where $b_I$ is an initial belief
state, $G'$ is a set of goal propositions and $\rho'$ is the acceptable
goal satisfaction probability. A sequence of actions $\pi'=(a_1',...,
a_n')$ is a solution plan for $\calP'$ if $a_i'$ is applicable in the
belief state $b_i$ (assuming $b_1 \equiv b_I$), which results in
$b_{i+1}$ ($1 \leq i \leq n$), and it achieves all goal propositions
with at least $\rho'$ probability.

\subsection{Compilation} 



Given an incomplete domain model $\calDP=\langle F,A \rangle$ and a
planning problem $\calPP=\langle \calDP,I,G \rangle$, we now describe a
compilation that translates the problem of synthesizing a solution plan
$\pi$ for $\calPP$ such that $R(\pi,\calPP) \geq \rho$ to a CPP problem
$\calP'$. At a high level, the realization of possible preconditions $p \in
\PreP(a)$ and effects $q \in \AddP(a)$, $r \in \DelP(a)$ of an action $a
\in A$ can be understood as being determined by the truth values of
\emph{hidden} propositions $p_a^{pre}$, $q_a^{add}$ and $r_a^{del}$ 
that are certain (i.e. unchanged in any world state) but
unknown. Specifically, the applicability of the action in a state $s \subseteq F$
depends on possible preconditions $p$ that are realized (i.e. $p_a^{pre}
= \mathbf{T}$), and their truth values in $s$.  Similarly, the values of
$q$ and $r$ are affected by $a$ in the resulting state only if they are
realized as add and delete effects of the action (i.e., $q_a^{add} =
\mathbf{T}$, $r_a^{del} = \mathbf{T}$).  There are totally
$2^{|\PreP(a)|+|\AddP(a)|+|\DelP(a)|}$ realizations of the action $a$, and
all of them should be considered simultaneously in checking the
applicability of the action and in defining corresponding resulting
states.

With those observations, we use multiple conditional effects to compile
away incomplete knowledge on preconditions and effects of the action $a$.
Each conditional effect corresponds to one realization of the action,
and can be fired only if $p = \mathbf{T}$ whenever $p_a^{pre} =
\mathbf{T}$, and adding (removing) an effect $q$ ($r$) into (from) the
resulting state depending on the values of $q_a^{add}$ ($r_a^{del}$,
respectively) in the realization.

While the partial knowledge can be removed, the hidden propositions
introduce uncertainty into the initial state, and therefore making it a
\emph{belief} state. Since the action $a$ may be applicable in some
but rarely all states of a belief state, \emph{certain}
preconditions $Pre(a)$ should be modeled as conditions of all
conditional effects. We are now ready to formally specify the resulting
domain $\calD'$ and problem $\calP'$.

For each action $a \in A$, we introduce new propositions
$p_a^{pre}$, $q_a^{add}$, $r_a^{del}$ and their negations $np_a^{pre}$, $nq_a^{add}$, $nr_a^{del}$ for each $p \in \PreP(a)$, $q
\in \AddP(a)$ and $r \in \DelP(a)$ to determine whether they are realized
as preconditions and effects of $a$ in the real domain.\footnote{These
  propositions are introduced once, and re-used for all actions sharing
  the same schema with $a$.} Let $F_{new}$ be the set
of those new propositions, then $F' = F \cup F_{new}$ is the proposition
set of $\calD'$.

Each action $a' \in A'$ is made from one action $a \in A$ such that
$Pre(a') = \emptyset$, and $E(a')$ consists of
$2^{|\PreP(a)|+|\AddP(a)|+|\DelP(a)|}$ conditional effects $e$. For each
conditional effect $e$:





\begin{itemize}
  \item $cons(e)$ is the union of the following sets:

    \begin{itemize}
      \item the certain preconditions $Pre(a)$,
      \item the set of possible preconditions of $a$ that are realized,
        and hidden propositions representing their
        realization: $\PreR(a) \cup \{ p_a^{pre} | p \in
        \PreR(a) \} \cup \{np_a^{pre} | p \in \PreP(a) \setminus
        \PreR(a) \}$,
      \item the set of hidden propositions corresponding to the realization
        of possible add (delete) effects of $a$: $\{ q_a^{add} | q \in \AddR(a) \} \cup \{ nq_a^{add} |
              q \in \AddP(a) \setminus \AddR(a) \}$ ($\{ r_a^{del} | r \in
              \DelR(a) \} \cup \{ nr_a^{del} | r \in \DelP(a) \setminus
              \DelR(a) \}$, respectively);

    \end{itemize}

  \item the \emph{single} outcome $\varepsilon$ of $e$ is defined as $add(\varepsilon) = Add(a) \cup \AddR(a)$, $del(\varepsilon) =
    Del(a) \cup \DelR(a)$, and $Pr(\varepsilon) = 1$,

\end{itemize}

\noindent
where $\PreR(a) \subseteq \PreP(a)$, $\AddR(a) \subseteq \AddP(a)$ and
$\DelR(a) \subseteq \DelP(a)$ represent the sets of realized preconditions
and effects of the action. In other words, we create a
conditional effect for each subset of the union of the possible
precondition and effect sets of the action $a$. Note that the inclusion
of new propositions derived from $\PreR(a)$, $\AddR(a)$, $\DelR(a)$ and their
``complement'' sets $\PreP(a) \setminus \PreR(a)$, $\AddP(a) \setminus
\AddR(a)$, $\DelP(a) \setminus \DelR(a)$ makes all condition sets of the
action $a'$ mutually exclusive. As for other cases (including those in which
some precondition in $Pre(a)$ is excluded), the action has no effect on
the resulting state, they can be ignored. The condition sets, therefore,
are also exhaustive.

The initial belief state $b_I$ consists of $2^{|F_{new}|}$ states $s'
\subseteq F'$ such that $p \in s'$ iff $p \in I$ ($\forall p \in F$),
each represents a complete domain model $\calD_i \in \candDP$ and
with the probability $\probh(\calD_i)$. The goal is $G' = G$, and the acceptable
goal satisfaction probability is $\rho' = \rho$.

\begin{mythem}
  Given a plan $\pi = (a_1,..., a_n)$ for the problem $\calPP$, and $\pi' =
  (a_1', ..., a_n')$ where $a_k'$ is the compiled version of $a_k$ ($1
  \leq k \leq n$) in $\calP'$. Then $R(\pi,\calPP) \geq \rho$ iff $\pi'$
  achieves all goals with at least $\rho$ probability in $\calP'$.
\end{mythem}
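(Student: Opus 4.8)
The plan is to establish a probability‑preserving bijection between the completions $\calD_i \in \candDP$ and the states in the initial belief state $b_I$, and then argue that this correspondence is preserved under the action of $\pi$ versus $\pi'$ step by step. First I would make the bijection explicit: a state $s' \in b_I$ restricted to $F_{new}$ encodes a choice, for each action $a$ and each possible precondition/add/delete $p$, of whether the corresponding hidden proposition ($p_a^{pre}$, $q_a^{add}$, $r_a^{del}$) is true; this choice picks out precisely the realized sets $\PreR_i(a), \AddR_i(a), \DelR_i(a)$ defining some $\calD_i$. By the construction of $b_I$, the probability mass $b_I(s')$ equals $\probh(\calD_i)$, and since the hidden propositions are never added or deleted by any $a'$ (their truth values occur only in the $cons(e)$ sets, never in $add(\varepsilon)$ or $del(\varepsilon)$), this labelling of states by completions is invariant along the whole execution of $\pi'$.

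Second, I would prove the key invariant by induction on the plan prefix length $k$: for every $\calD_i$, if $s'_k$ is the state in belief state $b_{k+1}$ that carries the $\calD_i$‑label, then $s'_k \cap F = \gamma((a_1,\dots,a_k), I, \calD_i)$. The base case $k=0$ is immediate from the definition of $b_I$ (the $F$‑part of each initial state is exactly $I$). For the inductive step, I would show that applying $a'_{k}$ to the $\calD_i$‑labelled state behaves exactly like applying $a_k$ in $\calD_i$: among the $2^{|\PreP(a_k)|+|\AddP(a_k)|+|\DelP(a_k)|}$ conditional effects of $a'_k$, exactly one has its condition set $cons(e)$ consistent with the hidden propositions of $s'_k$ — namely the one whose $\PreR, \AddR, \DelR$ match $\calD_i$ — because the inclusion of both the positive hidden propositions and their negations $np_a^{pre}$ etc.\ makes the condition sets mutually exclusive and exhaustive (as argued in the compilation). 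That effect fires iff additionally $Pre(a_k) \cup \PreR_i(a_k) \subseteq s'_k$, i.e.\ iff $a_k$'s (complete) preconditions hold in $\gamma((a_1,\dots,a_{k-1}), I, \calD_i)$; if it fires, the resulting $F$‑part becomes $(s'_k \cap F) \cup (Add(a_k)\cup\AddR_i(a_k)) \setminus (Del(a_k)\cup\DelR_i(a_k))$, matching the STRIPS projection in $\calD_i$. If no effect fires (preconditions unmet), the state is unchanged, which matches the paper's modified semantics in which a non‑applicable action maps $s$ to $s$ — this is exactly the place where that non‑standard convention is needed, and I would flag it explicitly.

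Third, having the invariant, I would conclude: the final belief state $b_{n+1}$ assigns to the $\calD_i$‑labelled state the same mass $\probh(\calD_i)$ it had initially, and that state satisfies $G' = G$ iff $\gamma(\pi, I, \calD_i) \models G$. Hence the total probability that $\pi'$ achieves the goals equals $\sum_{\calD_i : \gamma(\pi,I,\calD_i)\models G} \probh(\calD_i)$, which is exactly $R(\pi,\calPP)$ by Equation~\eqref{eqn:robust-def}. Therefore $R(\pi,\calPP)\ge\rho$ iff $\pi'$ achieves $G$ with probability at least $\rho' = \rho$ in $\calP'$, which is the claim. I would also note that applicability of $a'_k$ in every belief state is automatic since $Pre(a'_k)=\emptyset$, so $\pi'$ is always executable and no subtlety about plan validity intrudes.

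The main obstacle I anticipate is the bookkeeping in the inductive step — specifically, verifying carefully that the mutual exclusivity/exhaustiveness of the condition sets really does single out one conditional effect per completion, handling correctly the degenerate cases where $Pre(a_k)$ itself is violated or where $\PreP(a_k)=\AddP(a_k)=\DelP(a_k)=\emptyset$, and confirming that the "do nothing on failure" branches of $a'_k$ (the implicit cases the compilation says "can be ignored") genuinely leave the $F$‑part untouched rather than, say, deleting something. None of this is deep, but it is the part where an error in the compilation would surface, so it deserves the most care.
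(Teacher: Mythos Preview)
Your proposal is correct and follows essentially the same approach as the paper's own proof: a probability-preserving bijection between completions $\calD_i$ and initial states $s'_{i0}\in b_I$, followed by an induction over the plan prefix showing that the $F$-part of the $\calD_i$-labelled state tracks $\gamma((a_1,\dots,a_k),I,\calD_i)$, and concluding that the goal-achievement probability of $\pi'$ equals $R(\pi,\calPP)$. Your write-up is in fact more careful than the paper's sketch---you make explicit the invariance of the hidden propositions, the mutual-exclusivity argument that singles out one conditional effect per completion, the role of the ``unchanged on failure'' semantics, and the fact that $Pre(a'_k)=\emptyset$ guarantees executability---all of which the paper leaves implicit.
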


\begin{proof}[Proof (sketch)]
  According to the compilation, there is one-to-one mapping between each
  complete model $\calD_i \in \candDP$ in $\calPP$ and a (complete)
  state $s_{i0}' \in b_I$ in $\calP'$. Moreover, if $\calD_i$ has a
  probability of $\probh(\calD_i)$ to be the real model, then $s_{i0}'$
  also has a probability of $\probh(\calD_i)$ in the belief state $b_I$
  of $\calP'$.

  Given our projection over complete model $\calD_i$, executing $\pi$
  from the state $I$ with respect to $\calD_i$ results in a
  sequence of complete state $(s_{i1}, ..., s_{i(n+1)})$. On the other
  hand, executing $\pi'$ from $\{s_{i0}'\}$ in $\calP'$ results in a
  sequence of belief states $(\{s_{i1}'\}, ..., \{s_{i(n+1)}'\})$. With
  the note that $p \in s_{i0}'$ iff $p \in I$ ($\forall p \in F$), by
  induction it can be shown that $p \in s_{ij}'$ iff $p \in s_{ij}$
  ($\forall j \in \{1,...,n+1\}, p \in F$). Therefore, $s_{i(n+1)}
  \models G$ iff $s_{i(n+1)}' \models G = G'$.

  Since all actions $a_i'$ are deterministic and $s_{i0}'$ has a
  probability of $\probh(\calD_i)$ in the belief state $b_I$ of
  $\calP'$, the probability that $\pi'$ achieves $G'$ is
  $\sum_{s_{i(n+1)}' \models G} \probh(\calD_i)$, which is equal to
  $R(\pi,\calPP)$ as defined in Equation~\ref{eqn:robust-def}. This
  proves the theorem.
\end{proof}

\begin{figure}[t]
\centering
\epsfig{file=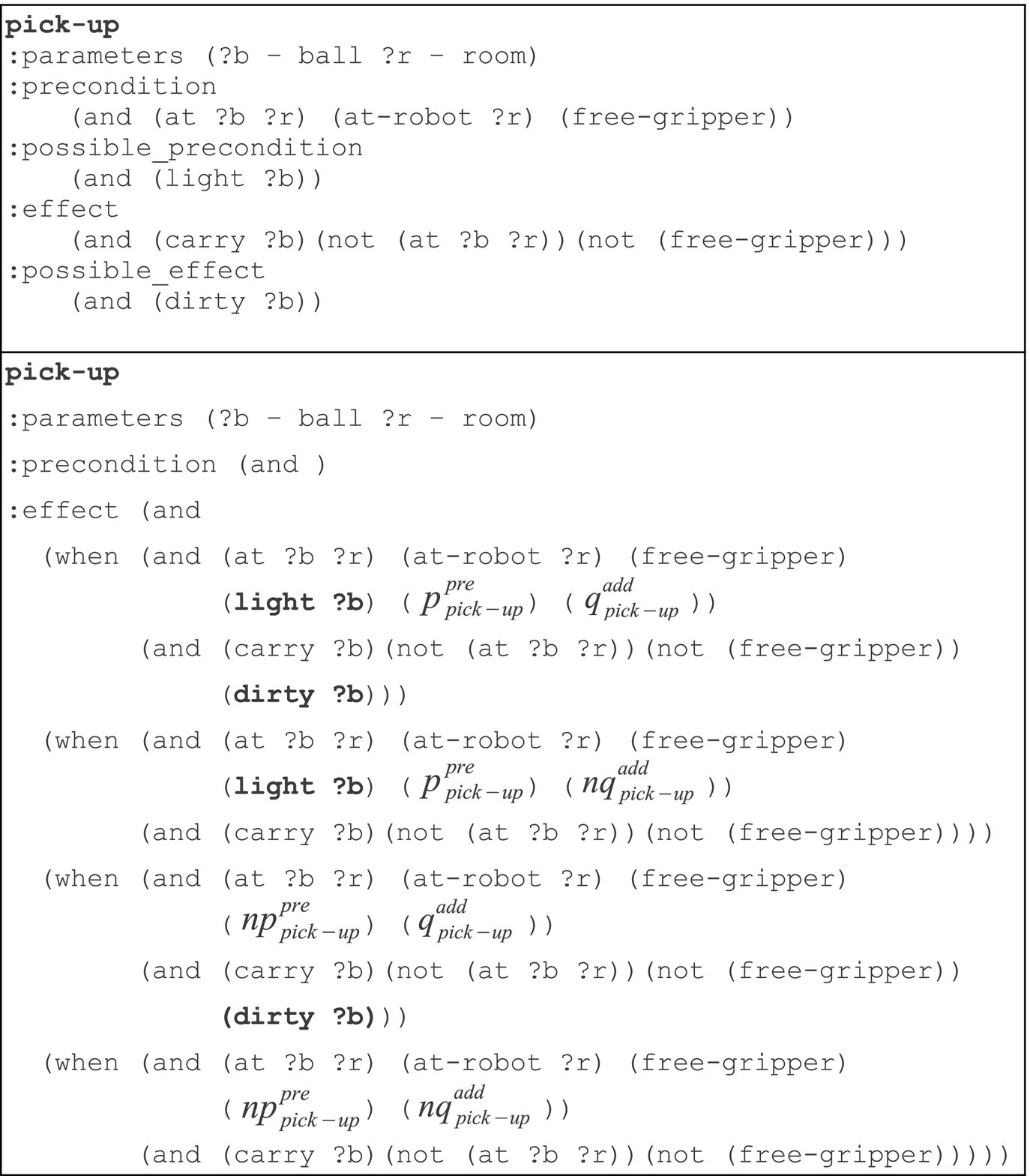,width=3.3in}
\caption{An example of compiling the action \emph{pick-up} in an incomplete
  domain model (top) into CPP domain (bottom). The hidden propositions
  $p_{pick-up}^{pre}$, $q_{pick-up}^{add}$ and their negations can be interpreted as
  whether the action requires light balls and makes balls dirty. Newly introduced and
  relevant propositions are marked in bold.}
\label{fig:compilation-example}
\end{figure}

\medskip
\noindent
{\bf Example:} Consider the action \emph{pick-up(?b - ball,?r - room)} in
the Gripper domain as described above. In addition to the possible
precondition \emph{(light ?b)} on the weight of the ball \emph{?b}, we also assume that since the
modeler is unsure if the gripper
has been cleaned or not, she models it with a possible add effect
\emph{(dirty ?b)} indicating
that the action might make the ball
dirty. Figure~\ref{fig:compilation-example} shows both the original and
the compiled specification of the action.


\section{Experimental Results}

We tested the compilation with Probabilistic-FF (PFF), a
state-of-the-art planner, on a range of domains in the International Planning
Competition.
We first discuss the results on the
variants of the Logistics and Satellite domains, where domain
incompleteness is deliberately modeled on the preconditions and effects
of actions (respectively). Our purpose here is
to observe how generated plans are robustified to satisfy a given
robustness threshold, and how the amount of incompleteness in
the domains affects the plan generation phase. We then describe the
second experimental setting in which we randomly introduce incompleteness
into IPC domains, and discuss the feasibility of our approach in this setting.\footnote{The experiments were
  conducted using an Intel Core2 Duo 3.16GHz machine with 4Gb of RAM,
  and the time limit is 15 minutes.}

\medskip
\noindent
{\bf Domains with deliberate incompleteness}

\noindent\textit{Logistics}: In this domain, each of the two cities $C_1$
and $C_2$ has an airport and a downtown area. The transportation between
the two distant cities can only be done by two airplanes $A_1$ and
$A_2$. In the downtown area of $C_i$ ($i \in \{1,2\}$), there are three \emph{heavy}
containers $P_{i1}, ..., P_{i3}$ that can be moved to the airport by a
truck $T_i$. Loading those containers onto the truck in the city $C_i$,
however, requires moving a team of $m$ robots $R_{i1}, ..., R_{im}$ ($m
\geq 1$), initially located in the airport, to the downtown area. The source of
incompleteness in this domain comes from the assumption that each pair
of robots $R_{1j}$ and $R_{2j}$ ($1 \leq j \leq m$) are made by the same manufacturer
$M_{j}$, both therefore might fail to load a \emph{heavy} container.\footnote{The
  \emph{uncorrelated incompleteness} assumption applies for possible
  preconditions of action schemas specified for different manufacturers. It should
not be confused here that robots $R_{1j}$ and $R_{2j}$ of the same
manufacturer $M_j$ can independently have fault.} The
actions loading containers onto trucks using robots made by a particular
manufacturer (e.g., the action schema \emph{load-truck-with-robots-of-M1} using
robots of manufacturer $M_1$), therefore, have a \emph{possible precondition}
requiring that containers should not be heavy. To simplify discussion
(see below), we assume that robots of
different manufacturers may fail to load heavy containers, though
independently, with the same probability of $0.7$. The goal is to
transport all three containers in the city $C_1$ to $C_2$, and vice
versa. For this domain, a plan to ship a container to another city
involves a step of loading it onto the truck, which can be done by a
robot (after moving it from the airport to the downtown). Plans can be
made more robust by using additional robots of \emph{different}
manufacturer after moving them into the downtown areas, with the cost of
increasing plan length.

\noindent\textit{Satellite}: In this domain, there are two satellites $S_1$
and $S_2$ orbiting the planet Earth, on each of which there are $m$
instruments $L_{i1}, ..., L_{im}$ ($i \in \{1,2\}$, $m \geq 1$) used to
take images of interested modes at some direction in the space. For each $j \in
\{1,...,m\}$, the lenses of instruments $L_{ij}$'s were made from a type
of material $M_j$, which might have an error affecting the quality of
images that they take. If the material $M_j$ actually has error, all instruments $L_{ij}$'s
produce mangled images. The knowledge of this incompleteness is modeled
as a \emph{possible add effect} of the action taking images using
instruments made from $M_j$ (for instance, the
action schema \emph{take-image-with-instruments-M1} using instruments of type $M_1$)
with a probability of $p_j$, asserting that images taken might be in a
bad condition. A typical plan to take an image using an instrument, e.g.
$L_{14}$ of type $M_4$ on the satellite $S_1$, is first to switch on
$L_{14}$, turning the satellite $S_1$ to a ground direction from which
$L_{14}$ can be calibrated, and then taking image. Plans can be made
more robust by using additional instruments, which might be on a different
satellite, but should be of \emph{different} type of materials and can also take an
image of the interested mode at the same direction.

\begin{table} {\scriptsize
\begin{center}
\begin{tabular}{| c || c | c | c | c | c |} 
\hline

$\rho$ & $m=1$ & $m=2$ & $m=3$ & $m=4$ & $m=5$ \\

\hline \hline

$0.1$ & $32/10.9$ & $36/26.2$ & $40/57.8$  & $44/121.8$ & $48/245.6$ \\

\hline

$0.2$ & $32/10.9$ & $36/25.9$  & $40/57.8$ & $44/121.8$ & $48/245.6$ \\

\hline

$0.3$ & $32/10.9$ & $36/26.2$ & $40/57.7$ & $44/122.2$ & $48/245.6$ \\

\hline

$0.4$ & $\bot$ & $42/42.1$ & $50/107.9$  & $58/252.8$ & $66/551.4$ \\

\hline

$0.5$ & $\bot$ & $42/42.0$ & $50/107.9$  & $58/253.1$  & $66/551.1$ \\

\hline

$0.6$ & $\bot$ & $\bot$  & $50/108.2$  & $58/252.8$ & $66/551.1$ \\

\hline

$0.7$ & $\bot$ & $\bot$ & $\bot$  & $58/253.1$ & $66/551.6$\\

\hline

$0.8$ & $\bot$ & $\bot$ & $\bot$ & $\bot$ & $66/550.9$\\

\hline

$0.9$ & $\bot$ & $\bot$ & $\bot$ & $\bot$ & $\bot$ \\

\hline
\end{tabular}

\caption{The results of generating robust plans in Logistics domain (see
  text).}
\vspace{-.1in}
\label{table:logistics}
\end{center} }

\end{table}

\begin{table} {\scriptsize
\begin{center}
\begin{tabular}{| c || c | c | c | c | c |} 
\hline

$\rho$ & $m=1$ & $m=2$ & $m=3$ & $m=4$ & $m=5$ \\

\hline \hline

$0.1$ & $10/0.1$ & $10/0.1$ & $10/0.2$  & $10/0.2$ & $10/0.2$ \\

\hline

$0.2$ & $10/0.1$ & $10/0.1$  & $10/0.1$ & $10/0.2$ & $10/0.2$ \\

\hline

$0.3$ & $\bot$ & $10/0.1$ & $10/0.1$ & $10/0.2$ & $10/0.2$ \\

\hline

$0.4$ & $\bot$ & $37/17.7$ & $37/25.1$  & $10/0.2$ & $10/0.3$ \\

\hline

$0.5$ & $\bot$ & $\bot$ & $37/25.5$  & $37/79.2$  & $37/199.2$ \\

\hline

$0.6$ & $\bot$ & $\bot$  & $53/216.7$  & $37/94.1$ & $37/216.7$ \\

\hline

$0.7$ & $\bot$ & $\bot$ & $\bot$  & $53/462.0$ & -- \\

\hline

$0.8$ & $\bot$ & $\bot$ & $\bot$ & $\bot$ & -- \\

\hline

$0.9$ & $\bot$ & $\bot$ & $\bot$ & $\bot$ & $\bot$ \\

\hline
\end{tabular}

\caption{The results of generating robust plans in Satellite domain (see
  text).}
\vspace{-.15in}
\label{table:satellite}
\end{center} }

\end{table}

Table~\ref{table:logistics} and \ref{table:satellite} shows respectively
the results in the Logistics and Satellite domains with $\rho \in \{0.1,
0.2, ..., 0.9\}$ and $m = \{1,2,...,5\}$. The number of complete domain
models in the two domains is $2^{m}$. For Satellite domain, the
probabilities $p_j$'s range from $0.25$, $0.3$,... to $0.45$ when $m$
increases from $1$, $2$, ... to $5$. For each specific value of $\rho$
and $m$, we report $l/t$ where $l$ is the length of plan and $t$ is the
running time (in seconds). Cases in which no plan is found within the
time limit are denoted by ``--'', and those where it is provable
that no plan with the desired robustness exists are denoted by ``$\bot$''.

\textit{Observations on fixed value of $m$}: In both domains, for a fixed value of $m$ we observe that the
solution plans tend to be longer with higher robustness threshold
$\rho$, and the time to synthesize plans is also larger. For instance,
in Logistics with $m=5$, the plan returned has $48$ actions if
$\rho=0.3$, whereas $66$-length plan is needed if $\rho$ increases to $0.4$. Since loading
containers using the same robot multiple times does not increase the
chance of success, more robots of different manufacturers need to move into the downtown area for
loading containers, which causes an increase in plan length. In the Satellite
domain with $m=3$, similarly, the returned plan has $37$ actions when $\rho=0.5$, but
requires $53$ actions if $\rho = 0.6$---more actions need to calibrate an
instrument of different material types in order to increase the chance of having a
good image of interested mode at the same direction.

Since the cost of actions is currently ignored in the compilation
approach, we also observe that more than the needed number of actions have been
used in many solution plans. In the Logistics domain,
specifically, it is easy to see that the
probability of successfully loading a container onto a truck using
robots of $k$ ($1 \leq k \leq m$) different manufacturers is $(1 -
{0.7}^{k})$. As an example, however, robots of all five manufacturers are used in a plan
when $\rho=0.4$, whereas using those of three manufacturers is enough.

\textit{Observations on fixed value of $\rho$}: In both domains, we
observe that the maximal robustness value of plans that can be returned
increases with higher number of manufacturers (though the higher the
value of $m$ is, the higher number of complete models is). For instance,
when $m=2$ there is not any plan returned with at least $\rho = 0.6$ in
the Logistics domain, and with $\rho=0.4$ in the Satellite domain. Intuitively,
more robots of different manufacturers offer higher probability of successfully
loading a container in the Logistics domain (and similarly for instruments of
different materials in the Satellite domain).

Finally, it may take longer time to synthesize plans with the same
length when $m$ is higher---in other words, the increasing amount of incompleteness
of the domain makes the plan generation phase harder. As an example, in the
Satellite domain, with $\rho=0.6$ it takes $216.7$ seconds to synthesize a $37$-length
plan when there are $m=5$ possible add effects at the schema level of the domain, whereas
the search time is only $94.1$ seconds when $m=4$. With $\rho=0.7$, no
plan is found within the time limit when $m=5$, although a plan with
robustness of $0.7075$ exists in the solution space. It is the increase
of the branching factors and the time spent on satisfiability test
and weighted model-counting used inside the planner 
that affect the search efficiency.

\medskip
\noindent {\bf Domains with random incompleteness} 

\noindent
We built a program to generate an incomplete domain model from a
deterministic one by introducing $M$ new propositions into each domain
(all are initially $\mathbf{T}$). Some of those new propositions were
randomly added into the sets of \emph{possible} preconditions/effects of
actions. Some of them were also randomly made \emph{certain} add/delete effects
of actions. With this strategy, each solution plan in an original
deterministic domain is also a \emph{valid plan}, as defined
earlier, in the corresponding incomplete domain. Our experiments with
the Depots, Driverlog, Satellite and ZenoTravel domains indicate that
because the annotations are random, there are often fewer opportunities
for the PFF planner to increase the robustness of a plan prefix during
the search. This makes it hard to generate plans with a desired level of
robustness under given time constraint.

In summary, our experiments on the two settings above suggest that
the compilation approach based on the PFF planner would be a reasonable
method for generating robust plans in domains and problems where there
are chances for robustifying existing action sequences in the search space.

\section{Conclusion and Future Work}

In this paper, we motivated the need for synthesizing robust plans under
incomplete domain models. We introduced annotations for expressing
domain incompleteness, formalized the notion of plan robustness, and showed
an approach to compile the problem of generating robust plans into
conformant probabilistic planning. We presented empirical results
showing the promise of our approach. For future work, we are
developing a planning approach that directly takes the incompleteness
annotations into account during the search, and compare it with our current
compilation method. We also plan to consider the problem of robustifying
a given  plan subject to a provided cost bound.




\medskip
\noindent {\bf Acknowledgement:} This research is supported in part by
ONR grants N00014-09-1-0017 and N00014-07-1-1049, the NSF grant
IIS-0905672, and by DARPA and the U.S. Army Research Laboratory under
contract W911NF-11-C-0037. The content of the information does not
necessarily reflect the position or the policy of the Government, and no
official endorsement should be inferred. We thank William Cushing for
several helpful discussions.

\bibliography{bib.bib}

\begin{thebibliography}{}

\bibitem[\protect\citeauthoryear{Bryce, Kambhampati, and
  Smith}{2006}]{bryce2006sequential}
Bryce, D.; Kambhampati, S.; and Smith, D.
\newblock 2006.
\newblock {Sequential monte carlo in probabilistic planning reachability
  heuristics}.
\newblock {\em Proceedings of ICAPS’06}.

\bibitem[\protect\citeauthoryear{Domshlak and Hoffmann}{2007}]{prob-ff}
Domshlak, C., and Hoffmann, J.
\newblock 2007.
\newblock {Probabilistic planning via heuristic forward search and weighted
  model counting}.
\newblock {\em JAIR} 30(1):565--620.

\bibitem[\protect\citeauthoryear{Fox, Howey, and Long}{2006}]{fox05}
Fox, M.; Howey, R.; and Long, D.
\newblock 2006.
\newblock Exploration of the robustness of plans.
\newblock In {\em AAAI}.

\bibitem[\protect\citeauthoryear{Garland and Lesh}{2002}]{garland2002plan}
Garland, A., and Lesh, N.
\newblock 2002.
\newblock Plan evaluation with incomplete action descriptions.
\newblock In {\em AAAI}.

\bibitem[\protect\citeauthoryear{Hoffmann, Weber, and
  Kraft}{2010}]{hoffmann2010sap}
Hoffmann, J.; Weber, I.; and Kraft, F.
\newblock 2010.
\newblock {SAP Speaks PDDL}.
\newblock AAAI.

\bibitem[\protect\citeauthoryear{Hyafil and
  Bacchus}{2003}]{hyafil2003conformant}
Hyafil, N., and Bacchus, F.
\newblock 2003.
\newblock {Conformant probabilistic planning via CSPs}.
\newblock In {\em Proceedings of the Thirteenth International Conference on
  Automated Planning and Scheduling},  205--214.

\bibitem[\protect\citeauthoryear{Jensen, Veloso, and
  Bryant}{2004}]{jensen2004fault}
Jensen, R.; Veloso, M.; and Bryant, R.
\newblock 2004.
\newblock {Fault tolerant planning: Toward probabilistic uncertainty models in
  symbolic non-deterministic planning}.
\newblock In {\em ICAPS}.

\bibitem[\protect\citeauthoryear{Kambhampati}{2007}]{rao07}
Kambhampati, S.
\newblock 2007.
\newblock Model-lite planning for the web age masses: The challenges of
  planning with incomplete and evolving domain theories.
\newblock In {\em AAAI}.

\bibitem[\protect\citeauthoryear{Kushmerick, Hanks, and
  Weld}{1995}]{kushmerick1995algorithm}
Kushmerick, N.; Hanks, S.; and Weld, D.
\newblock 1995.
\newblock {An algorithm for probabilistic planning}.
\newblock {\em Artificial Intelligence} 76(1-2):239--286.

\bibitem[\protect\citeauthoryear{Robertson and Bryce}{2009}]{robertson09}
Robertson, J., and Bryce, D.
\newblock 2009.
\newblock Reachability heuristics for planning in incomplete domains.
\newblock In {\em ICAPS'09 Workshop on Heuristics for Domain Independent
  Planning}.

\end{thebibliography}
\bibliographystyle{aaai}

\end{document}